\newcommand{\cmark}{\ding{51}}%
\newcommand{\xmark}{\ding{55}}%
\renewcommand{\arraystretch}{1.06}
\DeclareMathOperator*{\argmax}{arg\,max}
\DeclarePairedDelimiterX{\infdivx}[2]{(}{)}{#1\;\delimsize|\delimsize|\;#2}
\newtheorem{theorem}{Theorem}
\renewcommand{\S}       {\mathcal{S}}
\newcommand{\T}         {\mathcal{T}}
\newcommand{\Y}         {\mathcal{Y}}
\newcommand{\N}         {\mathcal{N}}
\newcommand{\LL}         {\mathcal{L}}
\newcommand{\D}         {\pmb D}
\newcommand{\x}         {\pmb x}
\newcommand{\z}         {\pmb z}
\newcommand{\y}         {\pmb y}
\renewcommand{\ll}         {\pmb l}
\newcommand{\yt}        {\tilde{\pmb y}}
\newcommand{\yh}        {\bar{\pmb y}}
\newcommand{\pimg}        {p_{img}}
\newcommand{\ploc}        {p_{loc}}
\newcommand{\pcls}        {p_{cls}}
\newcommand{\pinit}        {p_{init}}
\newcommand{\bmu}      {\pmb \mu}
\newcommand{\bSigma}      {\pmb \Sigma}
\newcommand{\eye}      {\pmb I}
\newcommand{\KL}        {\text{KL}}
\newcommand{\simk}    {\textit{SIM 10K}}
\newcommand{\city}    {\textit{Cityscapes}}
\newcommand{\cityfog} {\textit{Foggy Cityscapes}}
\newcommand{\kit} {\textit{KITTI}}
\newcommand{\faster} { Faster R-CNN\cite{ren2015faster} }
\newcommand{\mani} { Pseudo-labeling\cite{inoue2018cross} }
\newcommand{\adaptive} { Feature Learning~\cite{chen2018domain} }
\newcommand{\ours} { Noisy Labeling (Ours) }
\ificcvfinal\pagestyle{empty}\fi
\begin{document}


\title{A Robust Learning Approach to Domain Adaptive Object Detection}

\author{Mehran Khodabandeh\\
Simon Fraser University\\
Burnaby, Canada\\
{\tt\small mkhodaba@sfu.ca}
\and
Arash Vahdat\\
NVIDIA\\
California, USA\\
{\tt\small avahdat@nvidia.com}
\and
Mani Ranjbar\\
Quadrant\\
Burnaby, Canada\\
{\tt\small mani@quadrant.ai}
\and
William G. Macready\\
Quadrant\\
Burnaby, Canada\\
{\tt\small bill@quadrant.ai}
}

\maketitle
\ificcvfinal\thispagestyle{empty}\fi

\begin{abstract}
Domain shift is unavoidable in real-world applications of object detection. For example, in self-driving cars, the target domain consists of unconstrained road environments which cannot all possibly be observed in training data. Similarly, in surveillance applications sufficiently representative training data may be lacking due to privacy regulations.
In this paper, we address the domain adaptation problem from the perspective of robust learning and show that the problem may be formulated as training with noisy labels. We propose a robust object detection framework that is resilient to noise in bounding box class labels, locations and size annotations. To adapt to the domain shift, the model is trained on the target domain using a set of noisy object bounding boxes that are obtained by a detection model trained only in the source domain. We evaluate the accuracy of our approach in various source/target domain pairs and demonstrate that the model significantly improves the state-of-the-art on multiple domain adaptation scenarios on the SIM10K, Cityscapes and KITTI datasets.

\end{abstract}

\section{Introduction} 
Object detection lies at the core of computer vision and finds application in surveillance, medical imaging, self-driving cars, face analysis, and industrial manufacturing. 
Recent advances in object detection using convolutional neural networks (CNNs) 
have made current models fast, reliable and accurate.

However, domain adaptation remains a significant challenge in object detection.
In many discriminative problems (including object detection) it is usually assumed that the distribution of instances in both train (source domain) and test (target domain) set are identical. Unfortunately, 
this assumption is easily violated, and domain changes in object detection arise with variations in viewpoint, background, object appearance, scene type and illumination. Further, object detection models are often deployed in environments which differ from the training environment.

Common domain adaptation approaches are based on either supervised model fine-tuning in the target domain or unsupervised cross-domain representation learning. While 
the former requires additional labeled instances in the target domain, the latter eliminates this requirement at the cost of two new challenges. Firstly, the source/target representations 
should be matched in some space (e.g., either in input space~\cite{zhu2017unpaired, hoffman2017cycada} or hidden representations space~\cite{ganin2016domain, tzeng2017adversarial}). Secondly, a mechanism for feature matching must be defined (\textit{e.g.} maximum mean discrepancy (MMD)~\cite{murez2018image, long2017deep}, $\mathcal{H}$ divergence~\cite{chen2018domain}, or adversarial learning).

In this paper, we approach domain adaptation differently, and address the problem through robust training methods. Our approach relies on the observation that, although a (primary) model trained
in the source domain may have suboptimal performance in the target domain, it may nevertheless 
be used to detect objects in the target domain with some accuracy. The detected objects can 
then be used to retrain a detection model on both domains.
However, because the instances detected in the target domain may be inaccurate,
a robust detection framework (which accommodates these inaccuracies) must be used during retraining.

The principal benefit of this formulation is that the detection model is trained
in an unsupervised manner in the target domain. Although we do not explicitly
aim at matching representations between source and target domain, the detection
model may implicitly achieve this because it is fed by instances from both source and target domains.

To accommodate labeling inaccuracies we adopt a probabilistic perspective and develop a robust training framework for object detection on top of Faster R-CNN~\cite{ren2015faster}. 
We provide robustness against two types of noise: i) mistakes in object labels (\textit{i.e.}, a bounding box is labeled as person but actually is a pole), and 
ii) inaccurate bounding box location and size (\textit{i.e.}, a bounding box does not enclose the object).
We formulate the robust retraining objective so that the model can alter both bounding box class labels and bounding box location/size based on its current belief of 
labels in the target domain. This enables the robust detection model to refine the noisy labels in the target domain.

To further improve label quality in the target domain, we introduce an auxiliary image classification 
model. We expect that an auxiliary classifier can improve target domain labels because it may use cues that have not been utilized by the original detection model. 
As examples, additional cues can be based on additional input data (\textit{e.g.} motion or optical flow), different network architectures, or ensembles of models. 
We note however, that the auxiliary image classification model is only used during the retraining phase and the computational complexity of the final detector is preserved at test time.

The contributions of this paper are summarized as
follows: i) We provide the first (to the best of our knowledge) formulation of domain adaptation 
in object detection as robust learning.
ii)  We propose a novel robust object detection framework that
considers noise in training data on both object labels and locations. We use \faster as our base object detector, but our general framework, theoretically, could be adapted to other detectors (e.g. SSD~\cite{liu2016ssd} and YOLO~\cite{redmon2017yolo9000}) that minimize a classification loss and regress bounding boxes.
iii) We use an independent classification refinement module to allow other sources of information from the target domain (\textit{e.g.} motion, geometry, background information) 
to be integrated seamlessly. iv) We demonstrate that this robust framework achieves state-of-the-art on several cross-domain detection tasks.

\vspace{-2mm}
\section{Previous Work}
\vspace{-1mm}
\textbf{Object Detection:} The first approaches to object detection used a sliding window followed by a classifier based on hand-crafted features~\cite{dalal2005histograms, felzenszwalb2010object, viola2001rapid}. After advances in deep convolutional neural networks, methods such as R-CNN~\cite{girshick2014rich}, SPPNet~\cite{he2014spatial}, and Fast R-CNN~\cite{girshick2015fast} arose which used CNNs for feature extraction and classification. Slow sliding window algorithms were replaced with faster region proposal methods such as selective search ~\cite{uijlings2013selective}.
Recent object detection methods further speed bounding box detection. For example, in Faster R-CNN~\cite{ren2015faster} a region proposal network (RPN) was introduced to predict refinements in the locations and sizes of predefined anchor boxes. In SSD~\cite{liu2016ssd}, classification and bounding box prediction is performed on feature maps at different scales using anchor boxes with different aspect ratios. In YOLO~\cite{redmon2016you}, a regression problem on a grid is solved, where for each cell in the grid, the bounding box and the class label of the object centering at that cell is predicted. Newer extensions are found in~\cite{zhang2016faster, redmon2017yolo9000, dai2016r}. A comprehensive comparison of methods is reported in \cite{huang2017speed}. The goal of this paper is to increase the accuracy of an object detector in a new domain regardless of the speed. Consequently, we base our improvements on Faster R-CNN, a slower, but accurate detector.\footnote{Our adoption of faster R-CNN also allows for direct comparison with the state-of-the-art~\cite{chen2018domain}.}

\paragraph{Domain Adaptation:} was initially studied for image classification and the majority of the domain adaptation literature focuses on this problem~\cite{duan2012visual, duan2012domain, kulis2011you, gopalan2011domain, gong2012geodesic, fernando2013unsupervised, sun2016return, long2015learning, long2016unsupervised, ganin2016domain, ganin2014unsupervised, ghifary2016deep, busto2017open, motiian2017unified, li2018domain}. Some of the methods developed in this context include cross-domain kernel learning methods such as adaptive multiple kernel learning (A-MKL)~\cite{duan2012visual}, domain transfer multiple kernel learning (DTMKL)~\cite{duan2012domain}, and geodesic flow kernel (GFK)~\cite{gong2012geodesic}. There are a wide variety of approaches directed towards obtaining domain invariant predictors: supervised learning of non-linear transformations between domains using asymmetric metric learning ~\cite{kulis2011you}, unsupervised learning of intermediate representations~\cite{gopalan2011domain}, alignment of target and domain subspaces using eigenvector covariances~\cite{fernando2013unsupervised}, alignment the second-order statistics to minimize the shift between domains~\cite{sun2016return}, and covariance matrix alignment approach~\cite{wang2017deep}. The rise of deep learning brought with it steps towards domain-invariant feature learning. In \cite{long2015learning, long2016unsupervised} a reproducing kernel Hilbert embedding of the hidden features in the network is learned and mean-embedding matching is performed for both domain distributions. In \cite{ganin2016domain, ganin2014unsupervised} an adversarial loss along with a domain classifier is trained to learn features that are discriminative and domain invariant.

There is less work in domain adaptation for object detection. Domain adaptation methods for non-image classification tasks include \cite{gebru2017fine} for fine-grained recognition, \cite{chen2018road, hoffman2016fcns, zhang2017curriculum, vu2019advent} for semantic segmentation,  \cite{khodabandeh2018diy} for dataset generation, and \cite{mehrjou2018distribution} for finding out of distribution data in active learning. For object detection itself, \cite{xu2014domain} used an adaptive SVM to reduce the domain shift, \cite{raj2015subspace} performed subspace alignment on the features extracted from R-CNN, and \cite{chen2018domain} used Faster RCNN as baseline and took an adversarial approach (similar to \cite{ganin2014unsupervised}) to learn domain invariant features jointly on target and source domains. We take a fundamentally different approach by reformulating the problem as noisy labeling. We design a robust-to-noise training scheme for object detection which is trained on noisy bounding boxes and labels acquired from the target domain as pseudo-ground-truth.

\vspace{-3mm}
\paragraph{Noisy Labeling:}
Previous work on robust learning has focused on image classification where there are few and disjoint classes. Early work used instance-independent noise models, where each class is confused with other classes independent of the instance content~\cite{NatarajanNIPS13noisy, MnihICML12Arial, PatriniCVPR17, Sukhbaatar14Noisy, ZhangNIPS18generalized, yu2018learning}. Recently, the literature has shifted towards instance-specific label noise prediction~\cite{XiaoCVPR15, MisraCVPR16LabelingBias, VahdatNIPS17Robust, VahdatM13, VahdatZM14, VeitCVPR17Noisy, TanakaCVPR18Noisy, JiangICML18mentornet, DehghaniICLR18fidelity, RenICML18Robust}. To the best of our knowledge, ours is the first proposal for an object detection model that is robust to label noise.

\section{Method} \label{sec:method}

Following the common formulation for domain adaptation, we represent the training
data space as the source domain ($\S$) and the test data space as the target domain ($\T$).
We assume that an annotated training image dataset in $\S$ is supplied, but that only images in $\T$ are given (\textit{i.e.} there are no labels in $\T$).
Our framework, visualized in Fig.~\ref{fig:method}, consists of three main phases:

\begin{figure*}
\centering
\includegraphics[width=0.92\linewidth]{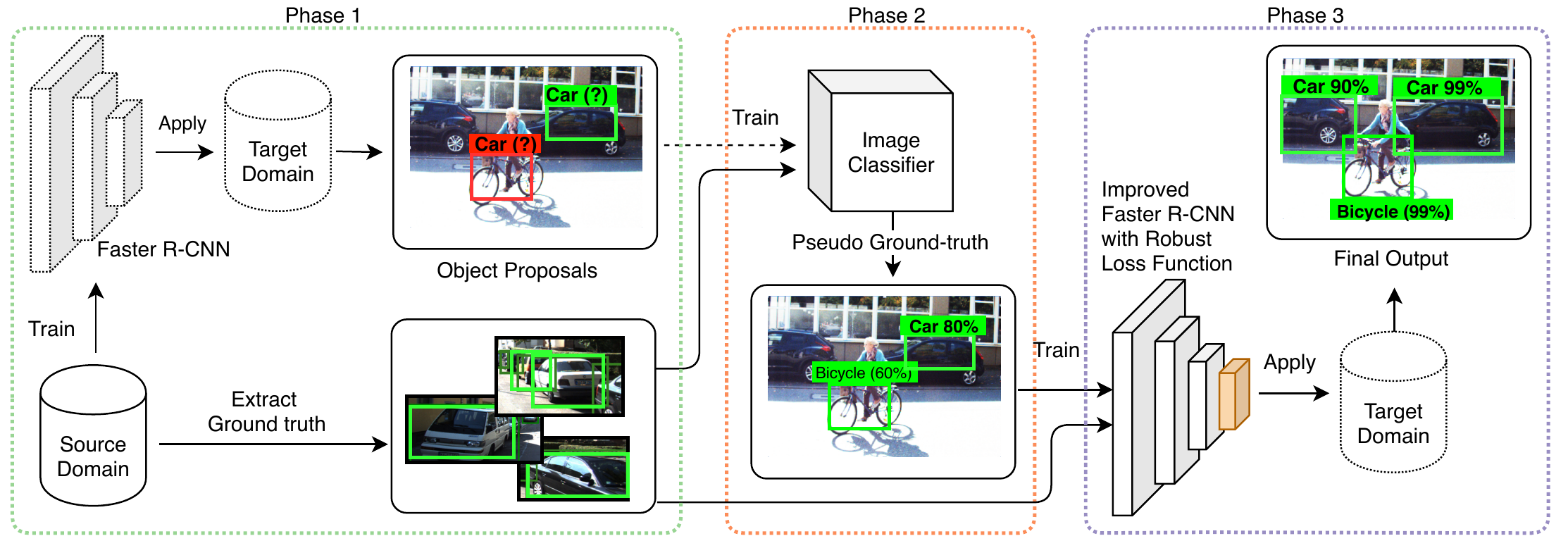}
\caption{The robust learning approach consists of three phases.
In phase 1, a detection module is trained using labeled data in the source domain.
This detector is then used to generate noisy annotations for images in the target domain. In phase 2, the annotations assigned in phase 1 are refined using a classification module. Finally, in phase 3, the detector is retrained using the original labeled data and the refined machine-generated annotations in the target domain. Retraining is formulated to account for the possibility of mislabeling.}
\label{fig:method}
\vspace{-1mm}
\end{figure*}
\begin{enumerate}
\vspace{-2mm}
\item \textbf{Object proposal mining}: A standard Faster R-CNN, trained
on the source domain, is used to detect objects in the target domain.
The detected objects form a proposal set in $\T$.
\vspace{-2mm}
\item \textbf{Image classification training}: Given the images extracted
from bounding boxes in $\S$, we train an image classification model
that predicts the class of objects in each image. The resulting
classifier is used to score the proposed bounding boxes in $\T$.
This model aids in training the robust object detection model in 
the next phase. The reason for introducing image classification is that 
i) this model may rely on representations
different than those used by the phase one detection model (\textit{e.g.}, motion features)
or it may use a more sophisticated network architectures, 
and ii) this model can be trained in a semi-supervised fashion using
labeled images in $\S$ and unlabeled images in $\T$.
\vspace{-2mm}
\item \textbf{~Robust object detection training}: 
In this phase a robust object detection model is trained using
object bounding boxes in $\S$ and object proposals in $\T$ (from phase one)
that has been rescored using the image classification (from phase two).
\end{enumerate}

We organize the detailed method description as follows. Firstly, we introduce background notation and provide a description of Faster R-CNN in Sec.~\ref{sec:faster_rcnn} to define the model used in phase one. Secondly, a probabilistic view of
Faster R-CNN in Sec.~\ref{sec:prob_faster_rcnn} provides
a foundation for the robust object detection framework presented in Sec.~\ref{sec:robust_det}. This defines the model used in phase three. Lastly, the image classification model used in phase two is discussed in Sec.~\ref{sec:image_classification}.

\vspace{-2mm}
\paragraph{Notation:} We are given training images in $\S$ along with
their object bounding box labels. This training set is denoted by $\D_{\S} = \{(\x^{(s)}, \y^{(s)})\}$ 
where $\x^{(s)} \in \S$ represents an image, 
$\y^{(s)}$ is the corresponding bounding box label for $\x^{(s)}$ 
and $s$ is an index. Each bounding box $\y=(y_c, \y_l)$
represents a class label by an integer, $y_c \in \Y=\{1, 2, \dots, C\}$, where $C$ is the number
of foreground classes, and a 4-tuple, $\y_l \in \mathcal{R}^4$, giving the coordinates of the top left corner, height, and width of the box. To simplify
notation, we associate each image with a single bounding box.\footnote{This restriction is for notational convenience only. Our implementation makes no assumptions
about the number of objects in each image.}

In the target domain, images are given without bounding box annotations. 
At the end of phase one, we augment this dataset with proposed bounding boxes generated by
Faster R-CNN. We denote the resulting set by $\D_{\T} = \{\x^{(t)}, \yt^{(t)}\}$ where 
$\x^{(t)} \in \T$ is an image, $\yt^{(t)} \in \Y$ is the corresponding proposed
bounding box
and $t$ is an index. Finally, we obtain the image classification score obtained at the end of phase two
for each instance in $\D_{\T}$ from $\pimg(y_c| \x, \yt_l)$ which
represents the probability of assigning the image cropped in the bounding box $\yt_l$
in $\x$ to the class $y_c \in \Y \cup \{0\}$ which is one of the foreground categories or background.

\subsection{Faster R-CNN} \label{sec:faster_rcnn}
Faster R-CNN~\cite{ren2015faster} is a two-stage detector consisting of two main components:
a region proposal network (RPN) that proposes regions of interests (ROI) for object detection
and an ROI classifier that predicts object labels for the proposed bounding boxes. 
These two components share the first convolutional layers. Given an input image, the shared layers extract a feature map for the image.
In the first stage, RPN predicts the probability of a set of predefined anchor boxes for being an object or background along with refinements 
in their sizes and locations. The anchor boxes are a fixed predefined set of boxes with varying positions, sizes and aspect ratios across the image.
Similar to RPN, the region classifier predicts object labels for ROIs proposed by the RPN as well as refinements for the location and size of the boxes. 
Features passed to the classifier are obtained with a \textit{ROI-pooling} layer. Both networks are trained jointly by minimizing a loss function:
\vspace{-2mm}
\begin{equation}\label{eq:faster_rcnn}
    \LL = \LL_{RPN} + \LL_{ROI}.
\vspace{-2mm}
\end{equation}

$\LL_{RPN}$ and $\LL_{ROI}$ represent losses used for the RPN and ROI classifier. The losses consist of a cross-entropy cost measuring
the mis-classification error and a regression loss quantifying the localization error.
The RPN is trained to detect and localize objects without regard to
their classes, and the ROI classification network is trained to classify the object labels. 

\subsection{A Probabilistic View of Faster R-CNN} \label{sec:prob_faster_rcnn}
In this section, we provide a probabilistic view of Faster R-CNN that will be used to define a robust loss function for noisy detection labels.
The ROI classifier in Faster R-CNN generates an object
classification score and object location for each proposed bounding box generated
by the RPN.
A classification prediction $\pcls(y_c| \x, \yt_l)$ represents the probability of a categorical random variable taking one of the disjoint $C + 1$ classes (\textit{i.e.}, foreground classes plus background).
This classification distribution is modeled using a softmax activation.
Similarly, we model the location prediction
$\ploc(\y_l| \x, \yt_l) = \N(\y_l; \yh_l, \sigma \eye)$ with a multivariate Normal distribution\footnote{This assumption follows naturally if the L$_2$-norm is used for the localization error in Eq.~\ref{eq:faster_rcnn}. 
In practice however, a combination of L$_2$ and L$_1$ norms are used which do not correspond to a simple probabilistic output.} with mean $\yh_l$ and constant diagonal covariance matrix $\sigma \eye$. 
In practice, only $\yh_l$ is generated by the ROI classifier which is used to localize the object.

\subsection{Robust Faster R-CNN} \label{sec:robust_det}
\vspace{-2mm}
To gain robustness against detection noise on both the label ($y_c$) and the box location/size ($\y_l$), 
we develop a refinement mechanism that corrects mistakes in both class and box location/size annotations. The phase three detection model is trained using these
refined annotations. 

If the training annotations are assumed to be noise-free then both $\pcls$ and $\ploc$ are used to define the maximum-likelihood loss
functions in Eq.~\ref{eq:faster_rcnn}. In the presence of noisy labels,
$\argmax \pcls$ and $ \argmax\ploc$ may disagree with the noisy labels but nevertheless correctly identify the true class or location of an object. 
Additionally, we also have access to the image classification model $\pimg$ from phase 2 that may be more accurate in predicting class labels for proposed bounding boxes in $\T$ since it
is trained using information sources different from the primary detection model.
The question then is how to combine $\pcls$, $\ploc$ from Faster R-CNN and $\pimg$ from the image model
to get the best prediction for the class and location of an object?

Vahdat~\cite{VahdatNIPS17Robust} has proposed a regularized EM algorithm for robust training
of image classification models. Inspired by this approach, we develop two mechanisms for correcting classification and localization errors, based on the assumption that when training a classification model 
on noisy labeled instances, the distribution over true labels should be close to both the distributions generated by the underlying classification model
and an auxiliary distribution obtained from other sources. Since the accuracy
of the learned classification model improves during training, the weighting of these information sources should shift during training. 
\vspace{-4mm}


\paragraph{Classification Error Correction:}
We seek a distribution, $q(y_c)$, which is close to both the classification model of Faster R-CNN and the image classification model $\pimg$, that is trained in phase two. We propose the following optimization objective for inferring $q(y_c)$
\begin{equation}\label{eq:kl_classification} \small \hspace{-0.2cm}
\min_{q} \KL(q(y_c) || \pcls(y_c| \x, \yt_l))\!+\!\alpha \KL(q(y_c) || \pimg(y_c| \x, \yt_l)).
\end{equation}
$\KL$ denotes the Kullback-Leibler divergence and $\alpha>0$ balances the trade-off between two terms. 
With large values of $\alpha$, $q$ favors the image classification model ($\pimg$) over Faster R-CNN predictions ($\pcls$), and with smaller $\alpha$, $q$ 
favors $\pcls$. Over the course of training, $\alpha$ can be changed to set a reasonable balance between the two distributions.

The following result provides a closed-form solution to the optimization
problem in Eq.~\ref{eq:kl_classification}:
\begin{theorem} \label{th:theorem1} Given two probability distributions $p_1(z)$ and $p_2(z)$
defined for the random variable $z$ and positive scalar $\alpha$, the
closed-form minimizer of
\begin{equation*}
    \min_{q} \KL(q(z) || p_1(z))+\alpha \KL(q(z) || p_2(z))
\vspace{-3mm}
\end{equation*}
\vspace{-5mm}
is given by:
\begin{equation} \label{eq:q_solution}
    q(z) \propto \big(p_1(z)p_2^\alpha(z) \big)^{\frac{1}{\alpha+1}}
\end{equation}
\end{theorem}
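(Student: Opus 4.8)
The plan is to avoid Lagrange multipliers entirely and instead recognize that the weighted sum of two KL terms collapses into a single KL divergence against the claimed minimizer, plus a constant independent of $q$. This immediately yields both the form of the solution and a proof that it is the \emph{global} minimum, sidestepping any separate second-order or convexity check.

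First I would define the candidate distribution $r(z) \propto \big(p_1(z)p_2^\alpha(z)\big)^{1/(\alpha+1)}$ explicitly as $r(z) = \frac{1}{Z}\big(p_1(z)p_2^\alpha(z)\big)^{1/(\alpha+1)}$ with normalizer $Z = \sum_z \big(p_1(z)p_2^\alpha(z)\big)^{1/(\alpha+1)}$, so that $r$ is a genuine probability distribution. I would then expand the objective $J(q) = \KL(q||p_1) + \alpha\,\KL(q||p_2)$ using the definition of KL divergence, grouping the entropy-like terms to obtain
\begin{equation*}
J(q) = (1+\alpha)\sum_z q(z)\log q(z) - \sum_z q(z)\log p_1(z) - \alpha\sum_z q(z)\log p_2(z).
\end{equation*}

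The key algebraic step is to substitute $\log r(z) = \frac{1}{\alpha+1}\big(\log p_1(z) + \alpha\log p_2(z)\big) - \log Z$ into $(1+\alpha)\,\KL(q||r)$ and verify by cancellation that $J(q) = (1+\alpha)\,\KL(q||r) - (1+\alpha)\log Z$. Because the additive term $-(1+\alpha)\log Z$ does not depend on $q$, minimizing $J$ over distributions is equivalent to minimizing $\KL(q||r)$. Finally I would invoke nonnegativity of KL divergence, $\KL(q||r)\ge 0$ with equality if and only if $q = r$, to conclude that the unique minimizer is $q = r$, which is exactly the claimed $q(z)\propto\big(p_1(z)p_2^\alpha(z)\big)^{1/(\alpha+1)}$.

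The only place requiring genuine care — and the closest thing to an obstacle — is the bookkeeping in that substitution, ensuring the two entropy-weighted log terms recombine to reproduce $J(q)$ exactly up to the constant; everything else is a standard property of KL. As a fallback I could set up the Lagrangian $J(q) + \lambda\big(\sum_z q(z) - 1\big)$, solve $\partial/\partial q(z) = 0$ to recover the same $q$, and then argue separately that $J$ is convex in $q$ (a nonnegative combination of the convex maps $q\mapsto\KL(q||p_i)$, since $\alpha>0$) over the probability simplex to certify that the stationary point is the global minimum. I prefer the KL-rewriting route, since convexity and global optimality come for free and the derivation is self-contained.
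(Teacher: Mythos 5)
Your proof is correct and follows essentially the same route as the paper's: both rewrite the weighted sum of KL divergences as $(1+\alpha)$ times a single KL divergence against the normalized geometric mean $r(z) \propto \big(p_1(z)p_2^\alpha(z)\big)^{1/(\alpha+1)}$ plus a constant independent of $q$, then conclude by nonnegativity of KL. The only differences are cosmetic — you work with discrete sums while the paper uses integrals over a continuous domain, and you make the constant $-(1+\alpha)\log Z$ and the uniqueness claim explicit where the paper leaves them implicit.
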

\begin{proof} Here, we prove the theorem for a continuous random variable defined in domain $\Omega$.
\begin{eqnarray} \hspace{-0.1cm}
\min_{q} && \KL(q(z) || p_1(z))+\alpha \KL(q(z) || p_2(z)) \nonumber \\
&& = \int_{\Omega} q(z) \log \frac{q(z)}{p_1(z)} dz+\alpha \int_{\Omega} q(z) \log \frac{q(z)}{p_2(z)} dz \nonumber \\
&& = (\alpha + 1) \int_{\Omega} q(z) \log \frac{q(z)}{\big[p_1(z)p_2^\alpha(z)\big]^{\frac{1}{\alpha + 1}}} dz \nonumber \\
&& = (\alpha + 1) \KL(q(z) \ || \ \frac{1}{Z}\big[p_1(z)p_2^\alpha(z)\big]^{\frac{1}{\alpha + 1}}) + C \nonumber
\end{eqnarray}
where $Z$ is the normalization for $\big(p_1(z)p_2^\alpha(z) \big)^{\frac{1}{\alpha+1}}$ and $C$ is a constant independent of $q$. The final KL is minimized when Eq.~\ref{eq:q_solution} holds.
\end{proof}
Using Theorem.~\ref{th:theorem1}, the solution to Eq.~\ref{eq:kl_classification}
is obtained as the weighted geometric mean of the two distributions:
\begin{equation}\label{eq:q_classification}
q(y_c) \propto \big(\pcls(y_c| \x, \yt_l)\pimg^\alpha(y_c| \x, \yt_l) \big)^{\frac{1}{\alpha+1}}.
\end{equation}
Since both $\pcls(y_c| \x, \yt_l)$ and $\pimg(y_c| \x, \yt_l)$ are categorical distributions (with softmax activation), $q(y_c)$ is
also a (softmax) categorical distribution whose parameters are 
obtained as the weighted mean of the
logits generated by $\pcls$ and $\pimg$, \textit{i.e.}, $\sigma\bigl((\ll_{cls} + \alpha \ll_{img})/(1+\alpha)\bigr)$ where $\sigma$ is the softmax and $\ll_{cls}$ and $\ll_{img}$ are the corresponding logits. Setting $\alpha = \infty$ in Eq.~\ref{eq:q_classification}
sets $q(y_c) $ to $\pimg(y_c| \x, \yt_l)$ while $\alpha = 0$
sets $q(y_c) $ to $\pcls(y_c| \x, \yt_l)$. 
During training we reduce $\alpha$ from large to smaller values. Intuitively, at the beginning of the training, $\pcls(y_c| \x, \yt_l)$ 
is inaccurate and provides a poor estimation of the true class labels, therefore by setting $\alpha$ to a large value we guide $q(y_c)$ to rely on $\pimg(y_c| \x, \yt_l)$ 
more than $\pcls$. By decreasing $\alpha$ throughout training, $q$ will rely on both $\pcls$ and $\pimg$ to form a distribution over true class labels. 
\vspace{-3mm}

\paragraph{Bounding Box Refinement:} 
Eq.~\ref{eq:q_classification} refines the classification labels
for the proposal bounding boxes in the target domain.
Here, we provide a similar method for correcting the errors in location and size.
Recall that Faster R-CNN's location predictions for the proposal bounding boxes
can be thought as a Normally distributed $\N(\y_l; \yt_l, \sigma \eye)$ with mean $\yt_l$ and constant diagonal covariance matrix $\sigma \eye$. 
We let $\pinit(\y_l|\x, \yt_l) := \N(\y_l; \yt_l, \sigma \eye)$
denote the initial detection for image $\x$.
At each iteration Faster R-CNN predicts a location for object using $\ploc(\y_l| \x, \yt_l) = \N(\y_l; \yh_l, \sigma \eye)$ for image $\x$
and the proposal $\yt_l$. We use the following objective function for inferring a distribution $q$ over true object locations:
\begin{equation}\label{eq:kl_location} \small \hspace{-0.2cm}
\min_{q} \KL(q(\y_l) || \ploc(\y_l| \x, \yt_l))\!+\!\alpha \KL(q(\y_l) || \pinit(\y_l|\x, \yt_l))
\end{equation}
As with Eq.~\ref{eq:kl_classification}, the solution to
Eq.~\ref{eq:kl_location} is the weighted geometric mean of the two distributions.
\begin{theorem} \label{th:theorem2}
Given two multivariate Normal distributions $p_1(\z) = \N(\z; \bmu_1, \bSigma)$ and $p_2(\z) = \N(\z; \bmu_2, \bSigma)$ with common covariance matrix $\bSigma$
defined for the random variable $\z$ and a positive scalar $\alpha$, the weighted
geometric mean $q(\z) \propto \big(p_1(\z)p_2^\alpha(\z) \big)^{\frac{1}{\alpha+1}}$ is also Normal with mean
$\big(\bmu_1 + \alpha \bmu_2\big)/(\alpha + 1)$ and covariance matrix $\bSigma$.
\end{theorem}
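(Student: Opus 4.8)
The plan is to work entirely in the log domain, since the weighted geometric mean is just a weighted sum of logarithms. Writing $\log q(\z) = \frac{1}{\alpha+1}\bigl(\log p_1(\z) + \alpha \log p_2(\z)\bigr) + \text{const}$, where the constant absorbs the normalizer implicit in $q(\z) \propto \big(p_1(\z)p_2^\alpha(\z)\big)^{\frac{1}{\alpha+1}}$, I would substitute the two Gaussian log-densities. Each contributes a quadratic form $-\tfrac12(\z - \bmu_i)^\top \bSigma^{-1}(\z - \bmu_i)$ plus a $\z$-independent term, so up to an additive constant the exponent of $q$ becomes $-\frac{1}{2(\alpha+1)}\bigl[(\z - \bmu_1)^\top \bSigma^{-1}(\z - \bmu_1) + \alpha (\z - \bmu_2)^\top \bSigma^{-1}(\z - \bmu_2)\bigr]$.

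Next I would collect terms by their order in $\z$. The crucial observation — and the place where the common-covariance hypothesis does all the work — is that the purely quadratic part simplifies to $-\frac{1+\alpha}{2(\alpha+1)}\z^\top \bSigma^{-1}\z = -\tfrac12 \z^\top \bSigma^{-1}\z$, which is exactly the second-order term of a Gaussian with covariance $\bSigma$. The linear part is $\frac{1}{\alpha+1}(\bmu_1 + \alpha\bmu_2)^\top \bSigma^{-1}\z$. Completing the square against the fixed quadratic form $-\tfrac12 \z^\top \bSigma^{-1}\z$ then forces the mean to be precisely $(\bmu_1 + \alpha\bmu_2)/(\alpha+1)$, with all remaining $\z$-independent pieces swept into the constant.

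Having shown that $\log q(\z)$ equals $-\tfrac12(\z - \bmu_q)^\top \bSigma^{-1}(\z - \bmu_q)$ plus a constant, with $\bmu_q = (\bmu_1 + \alpha\bmu_2)/(\alpha+1)$, I would conclude that $q$ has the functional form of a Gaussian density with mean $\bmu_q$ and covariance $\bSigma$; since $q$ is a normalized distribution by construction, the constant must be the standard Gaussian normalizer and no explicit integration is needed. I expect no genuine obstacle here — the calculation is routine completion of the square — but the one point worth flagging is that both the clean convex-combination mean and the unchanged covariance hinge on $p_1$ and $p_2$ sharing $\bSigma$. If the covariances differed, the quadratic coefficient would instead combine the precision matrices as $\frac{1}{\alpha+1}(\bSigma_1^{-1} + \alpha\bSigma_2^{-1})$ and the mean would become a precision-weighted average, so I would make sure the argument explicitly invokes the common-covariance assumption rather than asserting a false general identity.
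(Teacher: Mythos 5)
Your proposal is correct and takes essentially the same route as the paper's proof: expand the weighted combination of Gaussian quadratic forms, observe that the common covariance makes the quadratic term collapse to $-\tfrac12 \z^\top \bSigma^{-1}\z$, and complete the square to read off the mean $(\bmu_1+\alpha\bmu_2)/(\alpha+1)$ and covariance $\bSigma$. The only cosmetic difference is that you manipulate log-densities while the paper works with the exponentials directly (dropping constants via $\propto$ at each step); your closing caveat about unequal covariances yielding a precision-weighted mean is a correct and useful observation, though not needed for the stated result.
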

\begin{proof} By the definition of the Normal distribution, we have:
\begin{align}
q(\z) & \propto \big(p_1(\z)p_2^\alpha(\z) \big)^{\frac{1}{\alpha+1}} \nonumber \\
& \propto e^{-\frac{1}{2}\big[ \frac{1}{\alpha+1}(\z - \bmu_1)^T \bSigma^{-1}(\z - \bmu_1) + \frac{\alpha}{\alpha+1}(\z - \bmu_2)^T \bSigma^{-1}(\z - \bmu_2) \big]} \nonumber \\
&\propto e^{-\frac{1}{2}\big[ \z^T \bSigma^{-1} \z - 2\z^T \bSigma^{-1} (\frac{\bmu_1 + \alpha \bmu_2}{\alpha+1}) \big]}  \nonumber \\
&\propto e^{-\frac{1}{2} \big(\z - (\frac{\bmu_1 + \alpha \bmu_2}{\alpha+1}) \big)^T \bSigma^{-1}\big(\z - (\frac{\bmu_1 + \alpha \bmu_2}{\alpha+1})\big) }  \nonumber
\end{align}
Hence, $q(\z) = \N(\z; (\bmu_1 + \alpha \bmu_2)/(\alpha + 1), \bSigma)$
\end{proof}

Using Theorem.~\ref{th:theorem2}, the minimizer of Eq.~\ref{eq:kl_location}
is:
\begin{equation} \label{eq:q_loc_solution}
    q(\y_l) = \N\bigl(\y_l; (\yh_l + \alpha \yt_l)/(\alpha+1), \sigma \eye\bigr).
\end{equation}
This result gives the refined bounding
box location and size as the weighted average of box location/size
extracted from phase one and the current output of Faster R-CNN. Setting $\alpha=\infty$ ignores the current output of Faster R-CNN while $\alpha=0$ uses its output as the location. At training time, we initially set $\alpha$ to a large value and then gradually decrease it to smaller values. In this way, at
early stages of training $q$ relies on $\pinit$ because it's more accurate than the current estimation of the model, but as training progresses and $\ploc$ becomes more accurate,
$q$ relies more heavily on $\ploc$.

\paragraph{Training Objective Function:} We train a robust Faster R-CNN
using $\D_\S \cup \D_\T$. At each minibatch update, if an instance belongs 
to $\D_\S$ then the original loss function of Faster R-CNN is used for parameter update.
If an instance belongs to $\D_\T$ then $q(y_c)$ in Eq.~\ref{eq:q_classification} and $q(\y_l)$ in Eq.~\ref{eq:q_loc_solution} are used to refine the proposed
bounding box annotations.
$q(y_c)$ is used as the soft target labels in the cross entropy loss function
for the mis-classification term and $(\yh_l + \alpha \yt_l)/(\alpha+1)$
is used as the target location for the regression term. The modifications
are made only in the ROI classifier loss function because the RPN is class agnostic.

\paragraph{False Negative Correction:} Thus far, the robust detection method only refines the object
proposals generated in phase one. This allows the model to correct false positive detections, \textit{i.e.},
instances that do not contain any foreground object or that contain
an object from a class different than the predicted class. However, we would also like to correct false negative predictions, 
\textit{i.e.}, positive instances of foreground classes that are not detected in phase one.

To correct false negative instances, we rely on the hard negative
mining phase of Faster R-CNN. In this phase a set of hard negative
instances are added as background instances to the training set. Hard negatives 
that come from $\D_\S$ are actually background images. However, the ``background'' instances that are extracted from $\D_\T$ may 
be false negatives of phase one and may contain foreground objects. Therefore, 
during training for negative samples that belong to $\D_\T$, we define $\pimg(y_c)$ to be a softened one-hot vector by setting the probability of a background to $1-\epsilon$ and the probability of the other class labels uniformly to $\epsilon / C$. 
This is used as a soft target label in the cross-entropy loss.

\subsection{Image Classification:} \label{sec:image_classification}

Phase two of our framework uses an image
classification model to re-score bounding box proposals obtained
in phase one. 
The image classification network is trained in a semi-supervised setting on top of images cropped from both $\D_\S$ (clean training set) and $\D_\T$ (noisy labeled set). For images in $\D_\S$, we use the cross-entropy loss against ground truth labels, but, for images in $\D_\T$ the cross-entropy loss is computed against soft labels obtained by Eq~\ref{eq:kl_classification}, where the weighted geometric mean between predicted classification score and a softened one-hot annotation vector is computed. This corresponds to multiclass extension of~\cite{VahdatNIPS17Robust} which allows the classification model to refine noisy class labels for images in $\D_\T$.

Note that both $\D_\S$ and $\D_\T$ have bounding boxes annotations from foreground
classes (although instances in $\D_\T$ have noisy labels).
For training the image classification models, we augment these two datasets
with bounding boxes mined from areas in the image that do not have overlap with
bounding boxes in $\D_\S$ or $\D_\T$.

\section{Experiments}
To compare with state-of-the-art methods we follow the experimental design of \cite{chen2018domain}. We perform three experiments on three source/target domains and use similar hyper-parameters as \cite{chen2018domain}. 
We use the Faster R-CNN implementation available in the object detection API~\cite{huang2017speed} source code. In all the experiments, including the baselines and our method, we set the initial learning rate to $0.001$ for $50,000$ iterations and reduce it to $0.0001$ for the next $20,000$ iterations (a similar training scheme as \cite{chen2018domain}). We linearly anneal $\alpha$ from $100$ to $0.5$ for the first $50,000$ iterations and keep it constant thereafter. We use InceptionV2~\cite{szegedy2016rethinking}, pre-trained on ImageNet~\cite{deng2009imagenet}, as the backbone for Faster R-CNN.  In one slight departure, we set aside a small portion of the training set as validation for setting hyper-parameters. InceptionV4~\cite{szegedy2017inception} is used for the image classification phase with initial learning rate of $3\times10^{-4}$ that drops every $2$ epochs by a factor $0.94$. We set the batch size to $32$ and train for $300\,000$ steps.

\paragraph{Baselines:} We compare our method against the following progressively more sophisticated baselines.
\begin{itemize}
\setlength\itemsep{0.1em}
    \item \textbf{Faster R-CNN}~\cite{ren2015faster}: This is the most primitive baseline. A Faster R-CNN object detector is trained on the source domain and tested on the target domain so that the object detector is blind to the target domain. 
    
    \item \textbf{Pseudo-labeling}~\cite{inoue2018cross}: A simplified version of our method in which Faster R-CNN is trained on the source domain to extract object proposals in the target domain, and then based on a pre-determined threshold, a subset of the object proposals are selected and used for fine-tuning Faster R-CNN. This process can be repeated.  This method corresponds to the special case where $\alpha=0$ is fixed throughout training.
    The original method in \cite{inoue2018cross} performs a progressive adaptation, which is computationally extensive. Since our method and the previous state-of-the-art method perform only one extra fine-tuning step, we perform only one repetition for a fair comparison.
    
    \item \textbf{Feature Learning}~\cite{chen2018domain}: This state-of-the-art domain adaptation method reduces the domain discrepancy by learning robust features in an adversarial manner. We follow the experimental setup used in \cite{chen2018domain}.
    

\vspace{-3mm}
\end{itemize}

\paragraph{Datasets:} Following \cite{chen2018domain} we evaluate performance on multi- and single-label object detection tasks using three different datasets. Depending on the experiment, some datasets are used as both target and source domains and some are only used as either the source or target domain. 
\begingroup
\setlength{\tabcolsep}{4.1pt}
\begin{table*}[!ht]
\renewcommand\thetable{2}
\small
\begin{center}
\begin{tabularx}{\linewidth}{l | c c c |c c c c c c c c | c}
\hline
\multicolumn{13}{c}{\city{} $\rightarrow$ \cityfog{}} \\
\bottomrule
Method & Cls-Cor & Box-R & FN-Cor & person & rider & car & truck & bus & train & motorcycle & bicycle & mAP \\
\hline\hline
\multicolumn{2}{l}{\faster} &  & & 31.69 & 39.41 & 45.81 & 23.86 & 39.34 & 20.64 & 22.26 & 32.36 & 31.92 \\ \hline
\multicolumn{2}{l}{\mani} & & & 31.94 & 39.94 & 47.97 & 25.13 & 39.85 & 27.22 & 25.01 & 34.12 & 33.90  \\ \hline
\multicolumn{2}{l}{\adaptive} & & & \textbf{35.81} & 41.63 & 47.36 & 28.49 & 32.41 &  \textbf{31.18} & 26.53 & 34.26 & 34.70 \\ \hline
\multirow{3}{*}{\ours:} & \cmark & \xmark & \xmark & 34.82 & 41.89 & 48.93 & 27.68 & 42.53 & 26.72 & 26.65 & 35.76 & 35.62  \\ 
& \cmark & \cmark & \xmark & 35.26 & \textbf{42.86} & \textbf{50.29} & 27.87 & 42.98 & 25.43 & 25.30 & 35.94 & 36.06 \\
& \cmark & \cmark & \cmark & 35.10 & 42.15 & 49.17 & \textbf{30.07} & \textbf{45.25} & 26.97 & \textbf{26.85} & \textbf{36.03} & \textbf{36.45 } \\
\hline\hline
\multicolumn{2}{l}{\faster trained on target} & & & 40.63 & 47.05 & 62.50 & 33.12 & 50.43 & 39.44 & 32.57 & 42.43 & 43.52 \\ 
\hline
\end{tabularx}
\end{center}
\caption{Quantitative results comparing our method to baselines for adapting from \city{} to \cityfog{}. We record the average precision (AP) on the \city{} validation set. ``Cls-Cor'' represents ``classification error correction'', Box-R stands for ``Bounding Box Refinement'' component, and FN-Cor stands for ``False Negative Correction'' component of our method. The last row shows the base detector's performance if labeled data for target domain was available. }
\label{tab:city2fog}
\end{table*}
\endgroup
\begin{itemize}
\vspace{-4mm}
\setlength\itemsep{0.1em}
    \item \textbf{\simk{}}~\cite{johnson2016driving} is a simulated dataset containing $10,000$ images synthesized by the Grand Theft Auto game engine. In this dataset, which simulates car driving scenes captured by a dash-cam, there are $58,701$ annotated car instances with bounding boxes. We use $10\%$ of these for validation and the remainder for training. 
    
    \item \textbf{\city{}}~\cite{cordts2016cityscapes} is a dataset\footnote{This dataset is usually used for instance segmentation and not object detection.} of real urban scenes containing $3,475$ images captured by a dash-cam, $2,975$ images are used for training and the remaining $500$ for validation. Following ~\cite{chen2018domain} we report results on the validation set because the test set doesn't have annotations.  In our experiments we used the tightest bounding box of an instance segmentation mask as ground truth. There are $8$ different object categories in this dataset including \textit{person, rider, car, truck, bus, train, motorcycle and bicycle}. 
    
    \item \textbf{\cityfog{}}~\cite{sakaridis2018semantic} is the foggy version of \city{}. The depth maps provided in \city{} are used to simulate three intensity levels of fog in \cite{sakaridis2018semantic}. In our experiments we used the fog level with highest intensity (least visibility). The same dataset split used for \city{} is used for \cityfog{}. 
    
    \item \textbf{\kit{}}~\cite{Geiger2013IJRR} is another real-world dataset consisting of $7,481$ images of real-world traffic situations, including freeways, urban and rural areas. Following~\cite{chen2018domain} we used the whole dataset for both training, when it is used as source, and test, when it is used as target.
\end{itemize}

\subsection{Adapting synthetic data to real world}
In this experiment, the detector is trained on synthetic data generated using computer simulations and the model is adapted to real world examples. This is an important use case
as it circumvents the lack of annotated training data common to many applications (\textit{e.g}. autonomous driving). The source domain is \simk{} and the target domain 
is \city{} dataset (denoted by ``\simk{} $\rightarrow$ \city{}''). We use the validation set of \city{} for evaluating the results. We only train the detector on annotated \textit{cars}
because \textit{cars} is the only object common to both \simk{} and \city{}. 

\begin{table}[!htbp]
\renewcommand\thetable{1}
\setlength{\tabcolsep}{3.2pt}
\begin{center}
\small
\begin{tabular}{l | c c c |c}
\hline
\multicolumn{5}{c}{\simk{} $\rightarrow$ \city{}} \\
\bottomrule
Method  & Cls-Cor & Box-R & FN-Cor & AP \\
\hline\hline
\multicolumn{2}{l}{\faster} & & & 31.08 \\ \hline
\multicolumn{2}{l}{\mani} & & & 39.05  \\ \hline
\multicolumn{2}{l}{\adaptive} & & &  40.10 \\ \hline
\multirow{3}{*}{\ours:} & \cmark & \xmark & \xmark &  \textbf{41.28}  \\ 
& \cmark & \cmark & \xmark & \textbf{41.83} \\
& \cmark & \cmark & \cmark & \textbf{42.56} \\
\hline\hline
\multicolumn{2}{l}{\faster trained on target} & & & 68.10 \\ 
\hline
\end{tabular}
\end{center}
\caption{ Quantitative results comparing our method to baselines for adapting from \simk{} dataset to \city{}. We record average precision (AP) on the \city{} validation set. The last row shows the base detector's performance if labeled data for target domain was available.}
\label{tab:sim2city}
\end{table}

Table~\ref{tab:sim2city} compares our method to the baselines. We tested our method with ``Classification Error Correction (Cls-Cor)''\footnote{Turning off Cls-Cor reduces our approach to a method similar to \mani with similar performance. To maintain robustness to label noise, we run all experiments with Cls-Cor component.}, with or without the ``Bounding Box Refinement (Box-R)'' and ``False Negative Correction (FN-Cor)'' components. The state-of-the-art \adaptive method has $+1.05\%$ improvement over the basic \mani baseline. Our best performing method has a $+3.51\%$ improvement over the same baseline yielding more than triple the improvement over the incumbent state-of-the-art.

\begin{figure*} [!ht]
\centering
\renewcommand*{\arraystretch}{0.5}
\begin{tabular}{>{\centering\arraybackslash}m{0.004\linewidth}*{6}{m{0.188\linewidth}@{\hskip 3pt}}@{}}
 \rotatebox[origin=c]{90}{Faster R-CNN}   &
 \xintFor #1 in {1,2,4,5,6} \do{
     \includegraphics[width=\linewidth]{#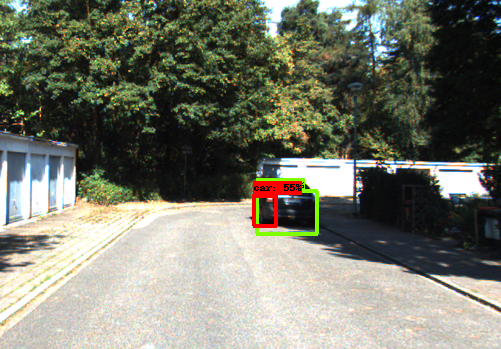} & } \\
 \rotatebox[origin=c]{90}{Ours} &
 \xintFor #1 in {1,2,4,5,6} \do{
     \includegraphics[width=\linewidth]{#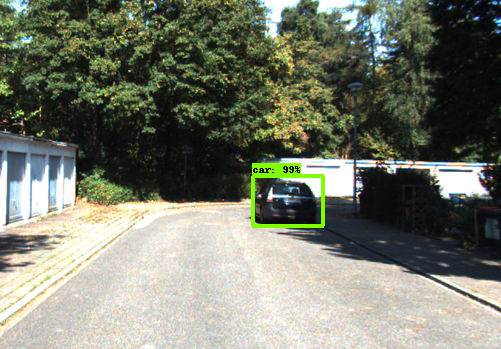} & } 
\end{tabular}

\caption{Qualitative comparison of our method with Faster R-CNN on the ``\city{} $\rightarrow$ \kit{}'' experiment. Each column corresponds to a particular image in the \kit{} test set. Top and bottom images in each column illustrate the bounding boxes of the cars detected by Faster R-CNN and our method respectively. In the first two columns our method corrects several false positives. In all cases our method successfully corrected the size/location of the bounding boxes (\textit{e.g.} the rooflines in the third column). In the fourth and fifth examples, our method has detected cars that Faster R-CNN has missed. Nevertheless, false positives do occur (\textit{e.g.} in column five), though the probability of those specific false positives is low ($53\%$ in this example).}
\label{fig:qualitative}
\end{figure*}

\subsection{Adapting normal to foggy weather}

Changes in weather conditions can significantly affect visual data. In applications such as autonomous driving, the object detector must perform accurately in all conditions~\cite{sakaridis2018semantic}. However, it is often not possible to capture all possible variations of objects in all weather conditions. Therefore, models must be adaptable to differing weather conditions. Here we evaluate our method and demonstrate its superiority over the current state-of-the-art for this task. We use \city{} dataset as the source domain and \cityfog{} as the target domain (denoted by ``\city{} $\rightarrow$ \cityfog{}''). 

Table.~\ref{tab:city2fog} compares our method to the baselines on multi-label domain adaptation. The categories in this experiment are \textit{person, rider, car, truck, bus, train, motorcycle, bicycle}. Average precision for each category along with the mean average precision (mAP) of all the objects are reported. Our method improves Faster R-CNN mAP by $+4.53\%$, while the state-of-the-art's improvement is $+2.78\%$.

\subsection{Adapting to a new dataset}
The previous examples of domain adaptation (synthetic data and weather change) are somewhat specialized. However, any change 
in camera (\textit{e.g.} angle, resolution, quality, type, etc.) or environmental setup can cause domain shift.
We investigate the ability of our method to adapt from one real dataset to another real dataset. We use \city{} and \kit{} as the source and target domain in two 
separate evaluations. We denote the experiment in which \city{} is the source domain and \kit{} is the target domain by ``\city{}~$\rightarrow$~\kit{}'', and vice versa by ``\kit{}~$\rightarrow$~\city{}''.

Tables~\ref{tab:citytokitti} and \ref{tab:kittitocity} compare average precision on the \textit{car} class, the only common object. Our method significantly outperforms the state-of-the-art in both situations (\city{} $\rightleftarrows$ \kit{}). Qualitative 
results of our method on the \kit{} test set are shown in Figure~\ref{fig:qualitative}.

\section{Conclusion}
Domain shift can severely limit the real-world deployment of object-detection-based applications when labeled data collection is either expensive or infeasible. 
We have proposed an unsupervised approach to mitigate this problem by formulating the problem as robust learning. 
Our robust object detection framework copes with labeling noise on both object classes and bounding boxes. State-of-the-art performance is achieved by 
robust training in the target domain using a model trained only in the source domain. This approach eliminates the need for collecting data in the target 
domain and integrates other sources of information using detection re-scoring.

\section{Acknowledgements}
This work was supported by the Security R\&D Group of SK Telecom.

\begin{table}[!htbp]
\setlength{\tabcolsep}{3.2pt}
\begin{center}
\small
\begin{tabular}{l | c c c |c }
\hline
\multicolumn{5}{c}{\kit{} $\rightarrow$ \city{}} \\
\bottomrule
Method & Cls-Cor & Box-R & FN-Cor & AP \\
\hline\hline
\multicolumn{2}{l}{\faster} & & & 31.10 \\ \hline
\multicolumn{2}{l}{\mani} & & & 40.23  \\ \hline
\multicolumn{2}{l}{\adaptive} & & &  40.57 \\ \hline
\multirow{3}{*}{\ours:} & \cmark & \xmark & \xmark &  \textbf{42.03}  \\ 
& \cmark & \cmark & \xmark & \textbf{42.39} \\
& \cmark & \cmark & \cmark & \textbf{42.98} \\
\hline\hline
\multicolumn{2}{l}{\faster trained on target} & & & 68.10 \\
\hline
\end{tabular}
\end{center}
\caption{Quantitative comparison of our method with baselines for adapting from \kit{} to \city{}. We record average precision (AP) on the \city{} test set. The last row gives the base detector's performance if labeled data for the target domain was available.}
\label{tab:citytokitti}
\end{table}

\begin{table}[!htbp]
\setlength{\tabcolsep}{3.2pt}
\begin{center}
\small
\begin{tabular}{l | c c c |c }
\hline
\multicolumn{5}{c}{\city{} $\rightarrow$ \kit{}} \\
\bottomrule

Method & Cls-Cor & Box-R & FN-Cor & AP \\
\hline\hline
\multicolumn{2}{l}{\faster} & & & 56.21 \\ \hline
\multicolumn{2}{l}{\mani} & & & 73.84  \\ \hline
\multicolumn{2}{l}{\adaptive} & & &  73.76 \\ \hline
\multirow{3}{*}{\ours:} & \cmark & \xmark & \xmark &  \textbf{76.36}  \\ 
& \cmark & \cmark & \xmark & \textbf{76.93} \\
& \cmark & \cmark & \cmark & \textbf{77.61} \\
\hline\hline
\multicolumn{2}{l}{\faster trained on target} & & & 90.13 \\
\hline
\end{tabular}
\end{center}
\caption{Quantitative comparison of our method with baselines for adapting \city{} to \kit{}. We record average precision (AP) on the \kit{} train set. The last row gives the base detector's performance if labeled data for target domain was available.}
\label{tab:kittitocity}
\end{table}

{\small
\bibliographystyle{ieee_fullname}
\bibliography{egbib}

\begin{thebibliography}{10}\itemsep=-1pt

\bibitem{busto2017open}
P.~P. Busto and J.~Gall.
\newblock Open set domain adaptation.
\newblock In {\em ICCV}, pages 754--763, 2017.

\bibitem{chen2018domain}
Y.~Chen, W.~Li, C.~Sakaridis, D.~Dai, and L.~Van~Gool.
\newblock Domain adaptive faster r-cnn for object detection in the wild.
\newblock In {\em Proceedings of the IEEE Conference on Computer Vision and
  Pattern Recognition}, pages 3339--3348, 2018.

\bibitem{chen2018road}
Y.~Chen, W.~Li, and L.~Van~Gool.
\newblock Road: Reality oriented adaptation for semantic segmentation of urban
  scenes.
\newblock In {\em Proceedings of the IEEE Conference on Computer Vision and
  Pattern Recognition}, pages 7892--7901, 2018.

\bibitem{cordts2016cityscapes}
M.~Cordts, M.~Omran, S.~Ramos, T.~Rehfeld, M.~Enzweiler, R.~Benenson,
  U.~Franke, S.~Roth, and B.~Schiele.
\newblock The cityscapes dataset for semantic urban scene understanding.
\newblock In {\em Proceedings of the IEEE conference on computer vision and
  pattern recognition}, pages 3213--3223, 2016.

\bibitem{dai2016r}
J.~Dai, Y.~Li, K.~He, and J.~Sun.
\newblock R-fcn: Object detection via region-based fully convolutional
  networks.
\newblock In {\em Advances in neural information processing systems}, pages
  379--387, 2016.

\bibitem{dalal2005histograms}
N.~Dalal and B.~Triggs.
\newblock Histograms of oriented gradients for human detection.
\newblock In {\em Computer Vision and Pattern Recognition, 2005. CVPR 2005.
  IEEE Computer Society Conference on}, volume~1, pages 886--893. IEEE, 2005.

\bibitem{DehghaniICLR18fidelity}
M.~Dehghani, A.~Mehrjou, S.~Gouws, J.~Kamps, and B.~Sch{\"o}lkopf.
\newblock Fidelity-weighted learning.
\newblock In {\em International Conference on Learning Representations (ICLR)},
  2018.

\bibitem{deng2009imagenet}
J.~Deng, W.~Dong, R.~Socher, L.-J. Li, K.~Li, and L.~Fei-Fei.
\newblock Imagenet: A large-scale hierarchical image database.
\newblock In {\em 2009 IEEE conference on computer vision and pattern
  recognition}, pages 248--255. Ieee, 2009.

\bibitem{duan2012domain}
L.~Duan, I.~W. Tsang, and D.~Xu.
\newblock Domain transfer multiple kernel learning.
\newblock {\em IEEE Transactions on Pattern Analysis and Machine Intelligence},
  34(3):465--479, 2012.

\bibitem{duan2012visual}
L.~Duan, D.~Xu, I.~W.-H. Tsang, and J.~Luo.
\newblock Visual event recognition in videos by learning from web data.
\newblock {\em IEEE Transactions on Pattern Analysis and Machine Intelligence},
  34(9):1667--1680, 2012.

\bibitem{felzenszwalb2010object}
P.~F. Felzenszwalb, R.~B. Girshick, D.~McAllester, and D.~Ramanan.
\newblock Object detection with discriminatively trained part-based models.
\newblock {\em IEEE transactions on pattern analysis and machine intelligence},
  32(9):1627--1645, 2010.

\bibitem{fernando2013unsupervised}
B.~Fernando, A.~Habrard, M.~Sebban, and T.~Tuytelaars.
\newblock Unsupervised visual domain adaptation using subspace alignment.
\newblock In {\em Proceedings of the IEEE international conference on computer
  vision}, pages 2960--2967, 2013.

\bibitem{ganin2014unsupervised}
Y.~Ganin and V.~Lempitsky.
\newblock Unsupervised domain adaptation by backpropagation.
\newblock {\em arXiv preprint arXiv:1409.7495}, 2014.

\bibitem{ganin2016domain}
Y.~Ganin, E.~Ustinova, H.~Ajakan, P.~Germain, H.~Larochelle, F.~Laviolette,
  M.~Marchand, and V.~Lempitsky.
\newblock Domain-adversarial training of neural networks.
\newblock {\em The Journal of Machine Learning Research}, 17(1):2096--2030,
  2016.

\bibitem{gebru2017fine}
T.~Gebru, J.~Hoffman, and L.~Fei-Fei.
\newblock Fine-grained recognition in the wild: A multi-task domain adaptation
  approach.
\newblock In {\em Computer Vision (ICCV), 2017 IEEE International Conference
  on}, pages 1358--1367. IEEE, 2017.

\bibitem{Geiger2013IJRR}
A.~Geiger, P.~Lenz, C.~Stiller, and R.~Urtasun.
\newblock Vision meets robotics: The kitti dataset.
\newblock {\em International Journal of Robotics Research (IJRR)}, 2013.

\bibitem{ghifary2016deep}
M.~Ghifary, W.~B. Kleijn, M.~Zhang, D.~Balduzzi, and W.~Li.
\newblock Deep reconstruction-classification networks for unsupervised domain
  adaptation.
\newblock In {\em European Conference on Computer Vision}, pages 597--613.
  Springer, 2016.

\bibitem{girshick2015fast}
R.~Girshick.
\newblock Fast r-cnn.
\newblock pages 1440--1448, 2015.

\bibitem{girshick2014rich}
R.~Girshick, J.~Donahue, T.~Darrell, and J.~Malik.
\newblock Rich feature hierarchies for accurate object detection and semantic
  segmentation.
\newblock In {\em Proceedings of the IEEE conference on computer vision and
  pattern recognition}, pages 580--587, 2014.

\bibitem{gong2012geodesic}
B.~Gong, Y.~Shi, F.~Sha, and K.~Grauman.
\newblock Geodesic flow kernel for unsupervised domain adaptation.
\newblock In {\em Computer Vision and Pattern Recognition (CVPR), 2012 IEEE
  Conference on}, pages 2066--2073. IEEE, 2012.

\bibitem{gopalan2011domain}
R.~Gopalan, R.~Li, and R.~Chellappa.
\newblock Domain adaptation for object recognition: An unsupervised approach.
\newblock In {\em Computer Vision (ICCV), 2011 IEEE International Conference
  on}, pages 999--1006. IEEE, 2011.

\bibitem{he2014spatial}
K.~He, X.~Zhang, S.~Ren, and J.~Sun.
\newblock Spatial pyramid pooling in deep convolutional networks for visual
  recognition.
\newblock In {\em European conference on computer vision}, pages 346--361.
  Springer, 2014.

\bibitem{hoffman2017cycada}
J.~Hoffman, E.~Tzeng, T.~Park, J.-Y. Zhu, P.~Isola, K.~Saenko, A.~A. Efros, and
  T.~Darrell.
\newblock Cycada: Cycle-consistent adversarial domain adaptation.
\newblock {\em arXiv preprint arXiv:1711.03213}, 2017.

\bibitem{hoffman2016fcns}
J.~Hoffman, D.~Wang, F.~Yu, and T.~Darrell.
\newblock Fcns in the wild: Pixel-level adversarial and constraint-based
  adaptation.
\newblock {\em arXiv preprint arXiv:1612.02649}, 2016.

\bibitem{huang2017speed}
J.~Huang, V.~Rathod, C.~Sun, M.~Zhu, A.~Korattikara, A.~Fathi, I.~Fischer,
  Z.~Wojna, Y.~Song, S.~Guadarrama, et~al.
\newblock Speed/accuracy trade-offs for modern convolutional object detectors.
\newblock In {\em IEEE CVPR}, volume~4, 2017.

\bibitem{inoue2018cross}
N.~Inoue, R.~Furuta, T.~Yamasaki, and K.~Aizawa.
\newblock Cross-domain weakly-supervised object detection through progressive
  domain adaptation.
\newblock In {\em Proceedings of the IEEE Conference on Computer Vision and
  Pattern Recognition}, pages 5001--5009, 2018.

\bibitem{JiangICML18mentornet}
L.~Jiang, Z.~Zhou, T.~Leung, L.-J. Li, and L.~Fei-Fei.
\newblock Mentornet: Regularizing very deep neural networks on corrupted
  labels.
\newblock In {\em International Conference on Machine Learning (ICML)}, 2018.

\bibitem{johnson2016driving}
M.~Johnson-Roberson, C.~Barto, R.~Mehta, S.~N. Sridhar, K.~Rosaen, and
  R.~Vasudevan.
\newblock Driving in the matrix: Can virtual worlds replace human-generated
  annotations for real world tasks?
\newblock {\em arXiv preprint arXiv:1610.01983}, 2016.

\bibitem{kulis2011you}
B.~Kulis, K.~Saenko, and T.~Darrell.
\newblock What you saw is not what you get: Domain adaptation using asymmetric
  kernel transforms.
\newblock In {\em Computer Vision and Pattern Recognition (CVPR), 2011 IEEE
  Conference on}, pages 1785--1792. IEEE, 2011.

\bibitem{li2018domain}
W.~Li, Z.~Xu, D.~Xu, D.~Dai, and L.~Van~Gool.
\newblock Domain generalization and adaptation using low rank exemplar svms.
\newblock {\em IEEE transactions on pattern analysis and machine intelligence},
  40(5):1114--1127, 2018.

\bibitem{liu2016ssd}
W.~Liu, D.~Anguelov, D.~Erhan, C.~Szegedy, S.~Reed, C.-Y. Fu, and A.~C. Berg.
\newblock Ssd: Single shot multibox detector.
\newblock In {\em European conference on computer vision}, pages 21--37.
  Springer, 2016.

\bibitem{long2015learning}
M.~Long, Y.~Cao, J.~Wang, and M.~I. Jordan.
\newblock Learning transferable features with deep adaptation networks.
\newblock {\em arXiv preprint arXiv:1502.02791}, 2015.

\bibitem{long2016unsupervised}
M.~Long, H.~Zhu, J.~Wang, and M.~I. Jordan.
\newblock Unsupervised domain adaptation with residual transfer networks.
\newblock In {\em Advances in Neural Information Processing Systems}, pages
  136--144, 2016.

\bibitem{long2017deep}
M.~Long, H.~Zhu, J.~Wang, and M.~I. Jordan.
\newblock Deep transfer learning with joint adaptation networks.
\newblock In {\em Proceedings of the 34th International Conference on Machine
  Learning-Volume 70}, pages 2208--2217. JMLR. org, 2017.

\bibitem{MisraCVPR16LabelingBias}
I.~Misra, C.~Lawrence~Zitnick, M.~Mitchell, and R.~Girshick.
\newblock Seeing through the human reporting bias: Visual classifiers from
  noisy human-centric labels.
\newblock In {\em CVPR}, 2016.

\bibitem{MnihICML12Arial}
V.~Mnih and G.~E. Hinton.
\newblock Learning to label aerial images from noisy data.
\newblock In {\em International Conference on Machine Learning (ICML)}, pages
  567--574, 2012.

\bibitem{motiian2017unified}
S.~Motiian, M.~Piccirilli, D.~A. Adjeroh, and G.~Doretto.
\newblock Unified deep supervised domain adaptation and generalization.
\newblock In {\em The IEEE International Conference on Computer Vision (ICCV)},
  volume~2, page~3, 2017.

\bibitem{murez2018image}
Z.~Murez, S.~Kolouri, D.~Kriegman, R.~Ramamoorthi, and K.~Kim.
\newblock Image to image translation for domain adaptation.
\newblock In {\em Proceedings of the IEEE Conference on Computer Vision and
  Pattern Recognition}, pages 4500--4509, 2018.

\bibitem{NatarajanNIPS13noisy}
N.~Natarajan, I.~S. Dhillon, P.~K. Ravikumar, and A.~Tewari.
\newblock Learning with noisy labels.
\newblock In {\em Advances in neural information processing systems}, pages
  1196--1204, 2013.

\bibitem{PatriniCVPR17}
G.~Patrini, A.~Rozza, A.~Menon, R.~Nock, and L.~Qu.
\newblock Making neural networks robust to label noise: {A} loss correction
  approach.
\newblock In {\em Computer Vision and Pattern Recognition}, 2017.

\bibitem{raj2015subspace}
A.~Raj, V.~P. Namboodiri, and T.~Tuytelaars.
\newblock Subspace alignment based domain adaptation for rcnn detector.
\newblock {\em arXiv preprint arXiv:1507.05578}, 2015.

\bibitem{redmon2016you}
J.~Redmon, S.~Divvala, R.~Girshick, and A.~Farhadi.
\newblock You only look once: Unified, real-time object detection.
\newblock In {\em Proceedings of the IEEE conference on computer vision and
  pattern recognition}, pages 779--788, 2016.

\bibitem{redmon2017yolo9000}
J.~Redmon and A.~Farhadi.
\newblock Yolo9000: better, faster, stronger.
\newblock {\em arXiv preprint}, 2017.

\bibitem{RenICML18Robust}
M.~Ren, W.~Zeng, B.~Yang, and R.~Urtasun.
\newblock Learning to reweight examples for robust deep learning.
\newblock In {\em International Conference on Machine Learning (ICML)}, 2018.

\bibitem{ren2015faster}
S.~Ren, K.~He, R.~Girshick, and J.~Sun.
\newblock Faster r-cnn: Towards real-time object detection with region proposal
  networks.
\newblock In {\em Advances in neural information processing systems}, pages
  91--99, 2015.

\bibitem{sakaridis2018semantic}
C.~Sakaridis, D.~Dai, and L.~Van~Gool.
\newblock Semantic foggy scene understanding with synthetic data.
\newblock {\em International Journal of Computer Vision}, pages 1--20, 2018.

\bibitem{Sukhbaatar14Noisy}
S.~Sukhbaatar, J.~Bruna, M.~Paluri, L.~Bourdev, and R.~Fergus.
\newblock Training convolutional networks with noisy labels.
\newblock {\em arXiv preprint arXiv:1406.2080}, 2014.

\bibitem{sun2016return}
B.~Sun, J.~Feng, and K.~Saenko.
\newblock Return of frustratingly easy domain adaptation.
\newblock In {\em AAAI}, volume~6, page~8, 2016.

\bibitem{szegedy2017inception}
C.~Szegedy, S.~Ioffe, V.~Vanhoucke, and A.~A. Alemi.
\newblock Inception-v4, inception-resnet and the impact of residual connections
  on learning.
\newblock In {\em Thirty-First AAAI Conference on Artificial Intelligence},
  2017.

\bibitem{szegedy2016rethinking}
C.~Szegedy, V.~Vanhoucke, S.~Ioffe, J.~Shlens, and Z.~Wojna.
\newblock Rethinking the inception architecture for computer vision.
\newblock In {\em Proceedings of the IEEE conference on computer vision and
  pattern recognition}, pages 2818--2826, 2016.

\bibitem{TanakaCVPR18Noisy}
D.~Tanaka, D.~Ikami, T.~Yamasaki, and K.~Aizawa.
\newblock Joint optimization framework for learning with noisy labels.
\newblock In {\em Computer Vision and Pattern Recognition (CVPR)}, 2018.

\bibitem{tzeng2017adversarial}
E.~Tzeng, J.~Hoffman, K.~Saenko, and T.~Darrell.
\newblock Adversarial discriminative domain adaptation.
\newblock In {\em Proceedings of the IEEE Conference on Computer Vision and
  Pattern Recognition}, pages 7167--7176, 2017.

\bibitem{uijlings2013selective}
J.~R. Uijlings, K.~E. Van De~Sande, T.~Gevers, and A.~W. Smeulders.
\newblock Selective search for object recognition.
\newblock {\em International journal of computer vision}, 104(2):154--171,
  2013.

\bibitem{VahdatNIPS17Robust}
A.~Vahdat.
\newblock Toward robustness against label noise in training deep discriminative
  neural networks.
\newblock In {\em Neural Information Processing Systems (NIPS)}, 2017.

\bibitem{VahdatM13}
A.~Vahdat and G.~Mori.
\newblock Handling uncertain tags in visual recognition.
\newblock In {\em International Conference on Computer Vision (ICCV)}, 2013.

\bibitem{VahdatZM14}
A.~Vahdat, G.-T. Zhou, and G.~Mori.
\newblock Discovering video clusters from visual features and noisy tags.
\newblock In {\em European Conference on Computer Vision (ECCV)}, 2014.

\bibitem{VeitCVPR17Noisy}
A.~Veit, N.~Alldrin, G.~Chechik, I.~Krasin, A.~Gupta, and S.~Belongie.
\newblock Learning from noisy large-scale datasets with minimal supervision.
\newblock In {\em 2017 IEEE Conference on Computer Vision and Pattern
  Recognition (CVPR)}, pages 6575--6583. IEEE, 2017.

\bibitem{viola2001rapid}
P.~Viola and M.~Jones.
\newblock Rapid object detection using a boosted cascade of simple features.
\newblock In {\em Computer Vision and Pattern Recognition, 2001. CVPR 2001.
  Proceedings of the 2001 IEEE Computer Society Conference on}, volume~1, pages
  I--I. IEEE, 2001.

\bibitem{wang2017deep}
Y.~Wang, W.~Li, D.~Dai, and L.~Van~Gool.
\newblock Deep domain adaptation by geodesic distance minimization.
\newblock {\em arXiv preprint arXiv:1707.09842}, 2017.

\bibitem{XiaoCVPR15}
T.~Xiao, T.~Xia, Y.~Yang, C.~Huang, and X.~Wang.
\newblock Learning from massive noisy labeled data for image classification.
\newblock In {\em Computer Vision and Pattern Recognition (CVPR)}, 2015.

\bibitem{xu2014domain}
J.~Xu, S.~Ramos, D.~V{\'a}zquez, and A.~M. L{\'o}pez.
\newblock Domain adaptation of deformable part-based models.
\newblock {\em IEEE transactions on pattern analysis and machine intelligence},
  36(12):2367--2380, 2014.

\bibitem{yu2018learning}
X.~Yu, T.~Liu, M.~Gong, and D.~Tao.
\newblock Learning with biased complementary labels.
\newblock In {\em Proceedings of the European Conference on Computer Vision
  (ECCV)}, pages 68--83, 2018.

\bibitem{zhang2016faster}
L.~Zhang, L.~Lin, X.~Liang, and K.~He.
\newblock Is faster r-cnn doing well for pedestrian detection?
\newblock In {\em European Conference on Computer Vision}, pages 443--457.
  Springer, 2016.

\bibitem{zhang2017curriculum}
Y.~Zhang, P.~David, and B.~Gong.
\newblock Curriculum domain adaptation for semantic segmentation of urban
  scenes.
\newblock In {\em The IEEE International Conference on Computer Vision (ICCV)},
  volume~2, page~6, 2017.

\bibitem{ZhangNIPS18generalized}
Z.~Zhang and M.~R. Sabuncu.
\newblock Generalized cross entropy loss for training deep neural networks with
  noisy labels.
\newblock In {\em Neural Information Processing Systems (NIPS)}, 2018.

\bibitem{zhu2017unpaired}
J.-Y. Zhu, T.~Park, P.~Isola, and A.~A. Efros.
\newblock Unpaired image-to-image translation using cycle-consistent
  adversarial networks.
\newblock In {\em Proceedings of the IEEE International Conference on Computer
  Vision}, pages 2223--2232, 2017.

\end{thebibliography}
}

\end{document}